\theoremstyle{plain}
\newtheorem{theorem}{Theorem}[section]
\newtheorem{lemma}[theorem]{Lemma}
\theoremstyle{definition}
\newtheorem{definition}[theorem]{Definition}
\theoremstyle{remark}
\definecolor{cite_color}{HTML}{114083}
\definecolor{link_color}{RGB}{153, 0,0}  
\definecolor{url_color}{RGB}{153, 102,  0}
\definecolor{emp_color}{RGB}{0,0,255}
\definecolor{shadecolor}{rgb}{0.94, 0.97, 1.0}
\newcommand{\jy}[1]{\textcolor{red}{}}
\newcommand{\Eqref}[1]{Eq.~\eqref{#1}}
\newcommand{\mbf}[1]{\mathbf{#1}}
\newcommand{\mbb}[1]{\mathbb{#1}}
\newcommand{\mcal}[1]{\mathcal{#1}}
\newcommand{\ConfirmationBias}{confidence bias\xspace}
\def\R{\mathbb{R}}
\def\x{\mathbf{x}}
\def\z{\mathbf{z}}
\def\xx{\times}
\def\V{\mathcal{V}}
\def\G{\mathcal{G}}
\def\F{\mathcal{F}}
\icmltitlerunning{GraphTTA: Test Time Adaptation on Graph Neural Networks}
\begin{document}

\twocolumn[
\icmltitle{GraphTTA: Test Time Adaptation on Graph Neural Networks}



\icmlsetsymbol{equal}{*}

\begin{icmlauthorlist}
\icmlauthor{Guanzi Chen}{equal,yyy}
\icmlauthor{Jiying Zhang}{equal,xxx}
\icmlauthor{Xi Xiao}{xxx}
\icmlauthor{Yang Li}{yyy}
\end{icmlauthorlist}

\icmlaffiliation{yyy}{Tsinghua-Berkeley Shenzhen Institute, Tsinghua Universtiy, Shenzhen, China}
\icmlaffiliation{xxx}{Shenzhen International Graduate School, Tsinghua University, Shenzhen, China}

\icmlcorrespondingauthor{Yang Li}{yangli@sz.tsinghua.edu.cn}
\icmlcorrespondingauthor{Xi Xiao}{xiaox@sz.tsinghua.edu.cn} 

\icmlkeywords{Machine Learning, ICML}

\vskip 0.3in
]



\printAffiliationsAndNotice{\icmlEqualContribution} 

\begin{abstract}
Recently, test time adaptation~(TTA) has attracted increasing attention due to its power of handling the distribution shift issue in the real world.
Unlike what has been developed for convolutional neural networks (CNNs) for image data, TTA is less explored for Graph Neural Networks (GNNs). 
There is still a lack of efficient algorithms tailored for graphs with irregular structures. 
In this paper, we present a novel test time adaptation strategy named Graph Adversarial Pseudo Group Contrast~(GAPGC), for graph neural networks TTA, to better adapt to the Out Of  Distribution~(OOD) test data. 
Specifically, GAPGC employs a contrastive learning variant as a self-supervised task during TTA, equipped with Adversarial Learnable Augmenter and Group Pseudo-Positive Samples to 
enhance the relevance between the self-supervised task and the main task, boosting the performance of the main task. 
Furthermore, we provide theoretical evidence that GAPGC can extract minimal sufficient information for the main task from information theory perspective.
Extensive experiments on molecular scaffold OOD dataset demonstrated that the proposed approach achieves state-of-the-art performance on GNNs.
\end{abstract}

\section{Introduction}
\begin{figure*}[th]
    \centering
          \vspace{-0.5mm}
    \includegraphics[width=0.99\linewidth]{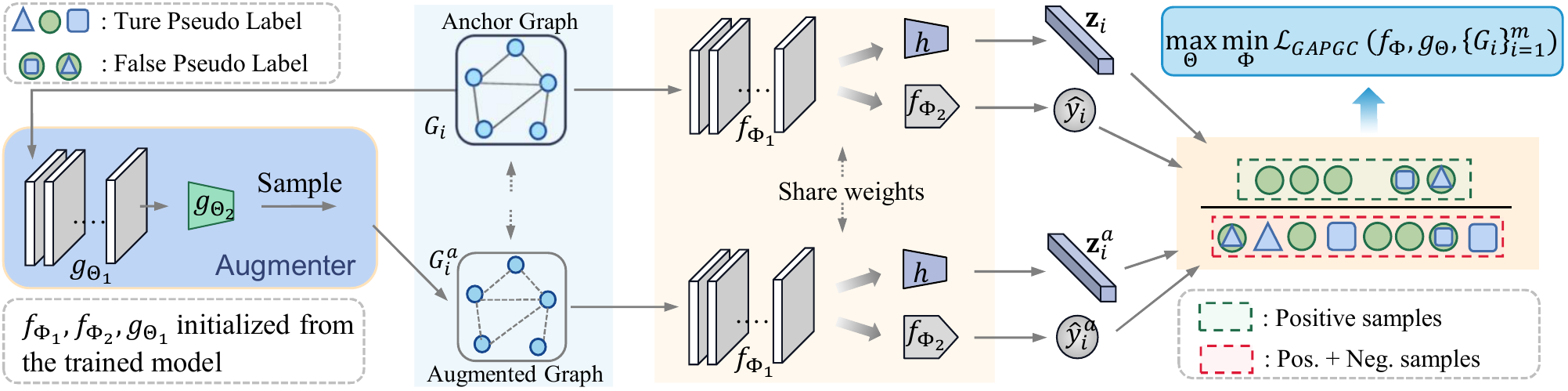}
    \vspace{-1mm}
     \caption{Overall framework of GAPGC for graph test time adaptation. The circle, triangle and square on the left  denote three different classes.  The green circles represent the selected positive samples with the same pseudo-class as the anchor graph.}
    \label{fig:framework}
        \vspace{-4mm}
\end{figure*}

Deep neural network models have gained excellent performance under the condition that training and testing data from the same distribution~\cite{kipf2016semi,xu2018powerful, zhang2022learnable}. However, performance suffers when the training data differ from the test data (also called \textit{distribution shift})~\cite{ding2021closer,li2022out}.

Recently test time adaptation has been proposed to improve the performance of OOD test data via model adaptation with test samples and has shown superior performance in the visual domain.
For example, Tent~\cite{wang2020tent} uses entropy minimization as a self-supervised objective during testing,
adapting the model by minimizing the entropy of the model predictions on test samples,
and TTT~\cite{sun2020test} introduces a rotation task as a self-supervised auxiliary task to be jointly optimized with the main task (the target task) during training and to further finetune the trained model during testing, while  TTT++~\cite{liu2021ttt++} replaces the rotation task with a contrastive learning task, which can extract discriminative representation by bringing closer the similar instances and separating the dissimilar instances. Despite the encouraging progress, existing TTA schemes are focusing on image data.
However, there is still a lack of efficient TTA algorithms tailored for graph data, for which there also exist many OOD circumstances, including molecular property prediction with different molecular scaffolds and protein fold classification with various protein families~\cite{2022arXiv220109637J,li2022out}, etc.

It is imperative to investigate the test time adaptation method tailored for GNNs for the following limitations of the current methods:
1) Unlike images, graph data is irregular and abstract of diverse nature (e.g. citation networks, social networks, and biomedical networks,
~\cite{zhang2022hypergraph}),
thus image-based methods are not suitable for graphs. For instance, many image augmentation-based methods~\cite{ashukha2020pitfalls,zhang2021memo}
can not be simply extended to graph due to the complication of graph augmentations.
2) Most of data-agnostic approaches involve Entropy Minimization~\cite{wang2020tent,niu2022efficient}, 
which is in effect equivalent to pseudo label~\cite{lee2013pseudo}. Trusting false pseudo-labels as “ground truth” by encoding them as hard labels could lead to overconfident mistakes (\textit{confidence bias}) and easily confuse the models \cite{zou2019confidence}.
3) The GNN encoders may learn representations involving the label-irrelevant redundant information through the self-supervised auxiliary task such as contrastive learning~\cite{liu2021ttt++}, resulting in a sub-optimal performance in the main task~\cite{suresh2021adversarial}.

To tackle above limitations, we propose Graph Adversarial Pseudo Group Contrast (namely GAPGC), a novel TTA method tailored for GNNs with a contrastive loss variant as the self-supervised objective during testing.  
Firstly, GAPGC uses group pseudo-positive samples, i.e. a group of graph augmentations with the same class as the anchor graph are selected as the positive samples~\citep{wang2021self}, where the class information comes from the pseudo-labels output by the online test model initialized with the offline trained model.
The group pseudo-positive samples enable the contrastive loss to exploit pseudo-labels into model training, bringing useful discriminative information from the well-trained model to the embedding space, thereby enhancing the relevance between the contrastive self-supervised task and the main task. Meanwhile, it also can mitigate the reliance on pseudo-labels and boost the tolerance to incorrect pseudo-labels. This is especially important for label-sensitive graph augmentations. 

Secondly, GAPGC adopts adversarial learning to avoid capturing redundant information. Specifically, a Adversarial Learnable Augmenter is proposed to generate aggressive positive samples.
On the one hand, GAPGC enforces the augmenter to disturb the original graphs and decrease the information being encoded by maximizing the contrastive loss. On the other hand, GAPGC optimizes the encoder to maximize the correspondence between the disturbed graph pairs by minimizing the contrastive loss.
Following such a min-max principle, GAPGC can reduce redundant information for the main task as much as possible, 
thereby increasing the relevance between the contrastive self-supervised task and the main task. 
Furthermore we provide theoretical evidence that GAPGC can yield upper bound guarantee of the redundant information from the original graphs from information theory perspective.
Empirically, the results on eight different molecular scaffold OOD datasets validate the effectiveness and generalization of our method.

 \textbf{Contributions:}
     1) We propose a graph test time adaptation method GAPGC tailored for GNNs, based on contrastive learning. The GAPGC explores the trained model knowledge by utilizing pseudo-labels in sample selection, and employs a min-max optimization to pull together the self-supervised task and the main task. To the best of our knowledge, it is the first test time adaptation method tailored for GNNs. 
     2) The GAPGC alleviates the \textit{\ConfirmationBias} issue caused by entropy minimization (pseudo-labels) through a group of positive samples. Besides, we also provide theoretical evidence that GAPGC can extract minimal sufficient information for the main task from information theory perspective. 
     3) Experiments on various Scaffold OOD molecular datasets demonstrate that GAPGC achieves state-of-the-art performance for GNN TTA.

\section{Related Work}

\paragraph{Test Time Adaptation.} 

Test time adaptation aims to adapt models based on test samples in the presence of distributional shifts. Test time adaptation can be further subdivided into test time training and fully test time adaptation according to whether it can access the source data and alter the training of the source model. 
Existing test time training methods~\cite{sun2020test,liu2021ttt++} rely on a self-supervised auxiliary task, which is jointly optimized with the main task on the source data and then further finetunes the model on test data. 
Fully test time adaptation methods with only test data contains batch normalization statistics adaptation \cite{li2016revisiting,nado2020evaluating, schneider2020improving}, prediction entropy minimization  \cite{wang2020tent,zhang2021memo,niu2022efficient}, and classifier adjustment~\cite{iwasawa2021test}. 
Our work follows the fully test time adaptation setting and aims to design a TTA method tailored for GNNs. We use adversarial contrastive learning with a group pseudo-positive samples to address two key limitations of prior works (i.e.involving redundant information  and misleading of incorrect pseudo-labels). 

\vspace{-4mm}
\paragraph{Graph Contrastive Learning.}
Recently, contrastive learning (CL) aiming to learn discriminative representation has been widely applied to the visual domain~\cite{tian2020contrastive,chen2020simple,wang2021self}.
In GNNs, many GCL methods are arisen for graph representation learning, such as GraphCL~\cite{you2020graph}, GRACE~\cite{zhu2020deep},  AD-GCL~\cite{suresh2021adversarial} and G-Mixup~\cite{han2022g}.
The performance of GCL heavily relies on the elaborate design of augmentations~\cite{zhao2022graph}. The simple operators like randomly dropping edges or dropping nodes may damage the label-related information and get label-various augmentations~\cite{wang2021towards}.
However, in the self-supervised setting, the dilemma is that model can not directly produce label-invariant augmentations via the current training model ~\cite{guo2022softedge,luo2022automated}. Fortunately, GAPGC uses the decent trained model to generate the relatively reliable pseudo-labels, avoiding the severe model shift caused by the incorrect positive samples.
\begin{table*}[th]
\def\p{$\pm$} 
\setlength\tabcolsep{4pt} 
\centering
\vspace{-2mm}
\caption{Test ROC-AUC (\%) of GIN(contexpred) on molecular property prediction benchmarks with OOD split.('$\uparrow$' denotes  performance improvement compared to the pure test baseline. PF: parameter-free.)}
\vspace{-3mm}
\scalebox{0.83}{
\begin{tabular}{l|cccccccc|c}
\toprule 
Methods & BBBP & Tox21 & Toxcast& SIDER & ClinTox & MUV & HIV & BACE & \textbf{Average}\\
\midrule
\# Test Molecules & 203 & 783 & 857 & 142 & 147 & 9308 & 4112 & 151 & $\diagup$ \\
  \# Binary prediction task & 1 & 12  & 617 & 27 & 2 & 17 & 1 & 1 & $\diagup$\\
\midrule
Test~(baseline)       & 69.23 & \underline{75.44} & 63.68 & 59.70 & 68.94 & 78.32 & 77.52 &  80.16 & 71.62   \\
Tent~\cite{wang2020tent}        & 68.80 & 74.70 & 63.41 & 59.50 & 69.68 & 78.18 & 76.72 &  80.39 & 71.42 \\
BN Ada.~\cite{schneider2020improving} & 69.31 & 75.30&	\underline{63.95} &	60.09&	71.59&	78.63&	77.34&	80.26&72.06
\\
SHOT~\cite{liang2020we} & \underline{69.46} & 74.84 & 63.77 & \underline{60.47} & 68.52 & 79.05 & 77.57 & 80.19 & 71.73\\
\midrule
PF-GAPGC~(ours) & 69.25 & 75.33 & 63.89 & 59.93 & \underline{72.92} & 78.91 & \underline{78.14} & \underline{82.80} & \underline{72.65} \\
GAPGC~(ours) & \textbf{70.34}$\uparrow_{1.11}$ & \textbf{76.00}$\uparrow_{0.56}$  & \textbf{64.58}$\uparrow_{0.90}$	&	\textbf{60.85}$\uparrow_{1.15}$ & \textbf{72.92}$\uparrow_{3.98}$ & \textbf{80.17}$\uparrow_{1.85}$ & \textbf{78.63}$\uparrow_{1.11}$ & \textbf{83.03}$\uparrow_{2.87}$ & \textbf{73.31}$\uparrow_{1.69}$ \\

\bottomrule
    \end{tabular}
    }
    \vspace{-5mm}
    \label{tab:molecular}    
\end{table*}

\begin{figure}[t]
    \centering
    \vspace{-4mm}
    \subfigure{
    \begin{minipage}[s]{0.47\linewidth}
    \centering
    \includegraphics[width=1\linewidth]{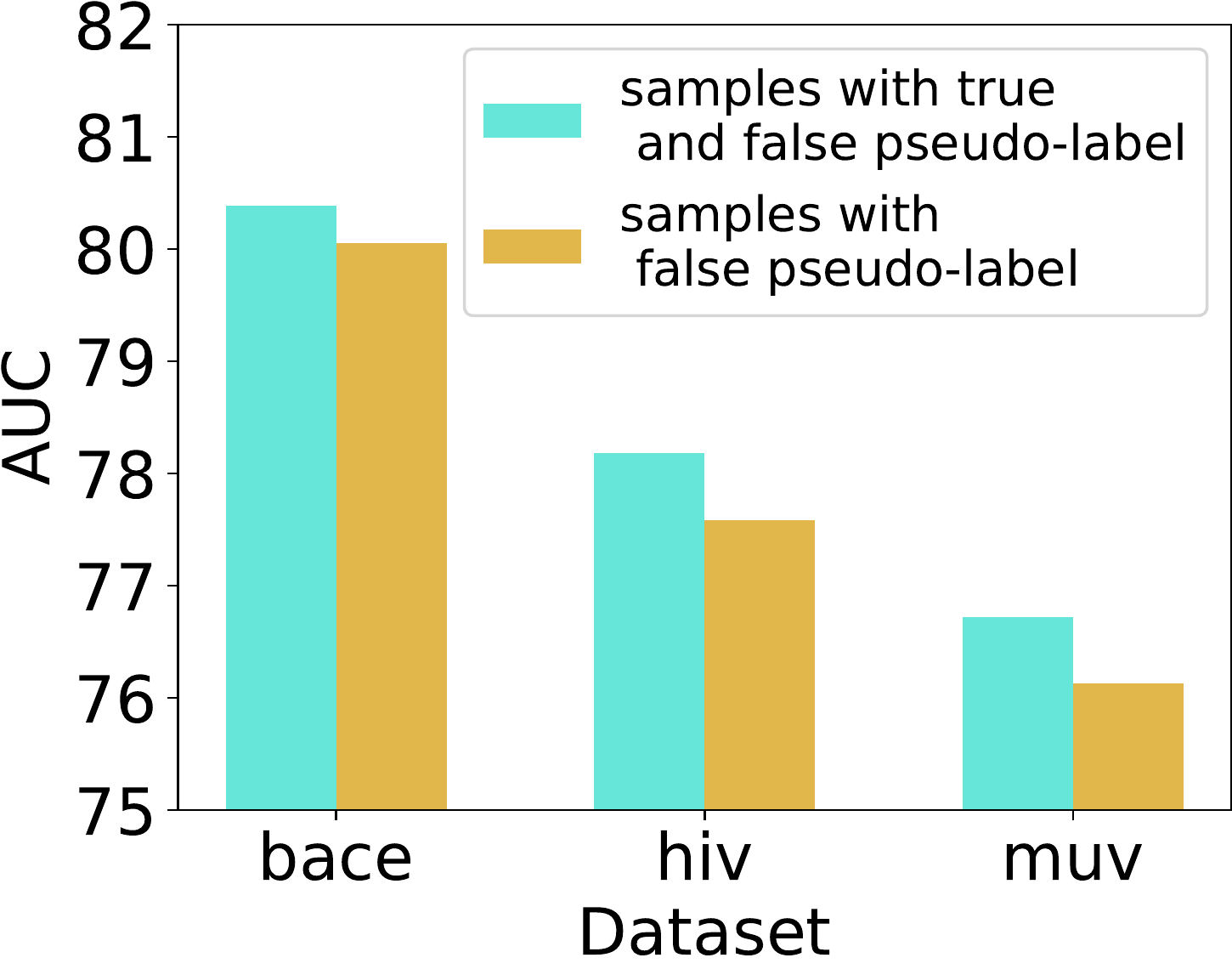}
    \vspace{-7mm}
    \caption*{(a)}
    \vspace{-5mm}
    \end{minipage}
    }
    \subfigure{
    \begin{minipage}[s]{0.47\linewidth}
    \centering
    \includegraphics[width=1\linewidth]{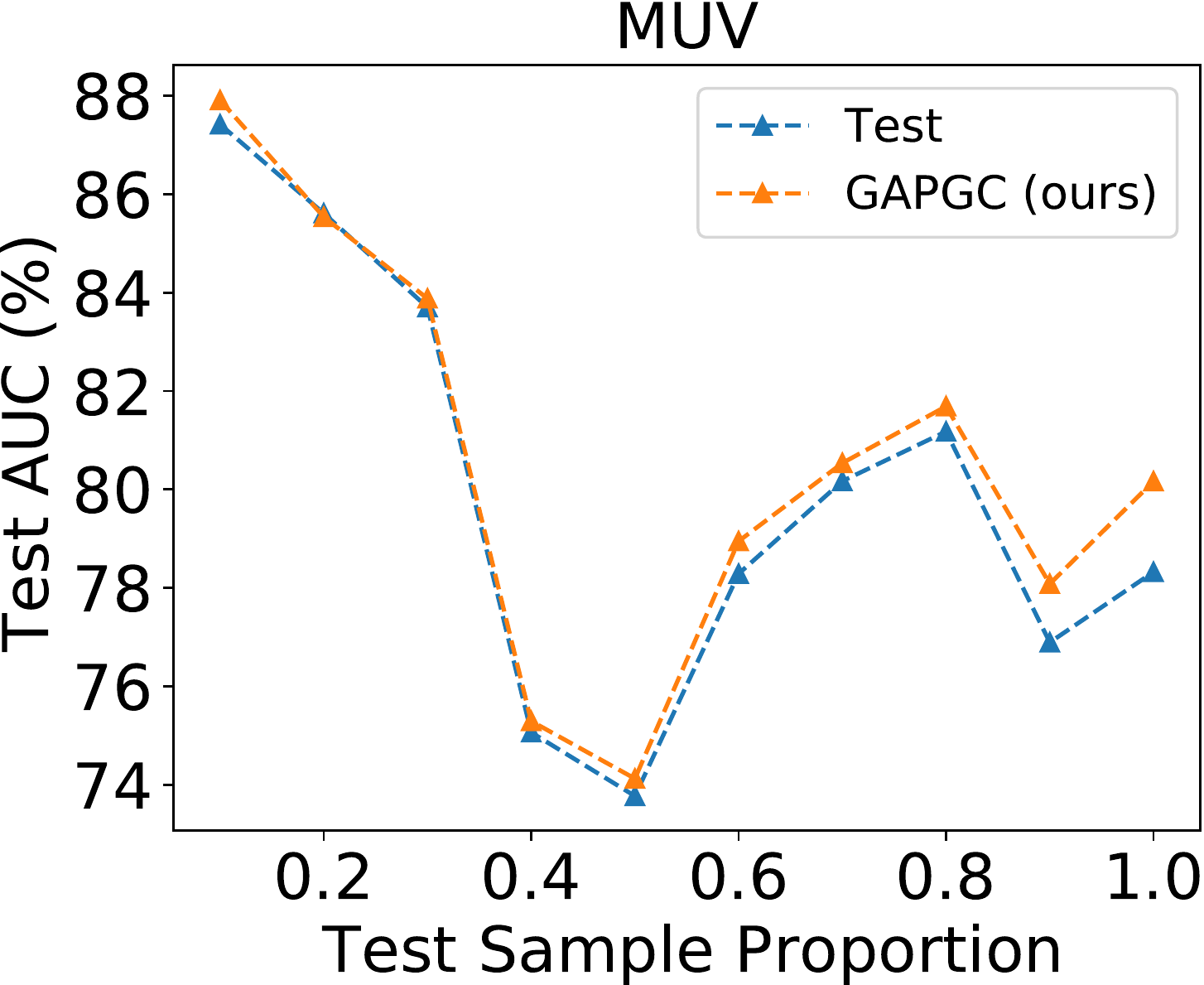}
    \vspace{-7mm}
    \caption*{(b)}
    \vspace{-5mm}
    \end{minipage}
    }
    \vspace{-1mm}
     \caption{(a) Entropy minimization performed on test samples with only false pseudo-label will degrade the performance, compared with that performed on all test samples with both true pseudo-label and false pseudo-label. (b) AUC with different proportion of test data during testing.}
    \label{fig:label_radio}
        \vspace{-7mm}
\end{figure}

\vspace{-1mm}
\section{Graph Adversarial Pseudo Group Contrast}
\vspace{-1mm}


\paragraph{Notations.}  We use boldface letter $\x\in \R^n$ to denote an $n$-dimensional vector, where $\x(i)$ is the $i^\text{th}$ entry of $\x$.  Let $G=(V,E)$ be a graph with vertices $V$ and edges $E$. We denote by $\mathbf{A}\in \R^{|\V|\xx|\V|}$ the adjacency matrix of $G$. Suppose $G$ has node features $\{\x_i\in\R^{F}\}_{i=1}^{|V|}$. 
For test time adaptation, we denote the previously trained model as $f_\Phi$, in which $\Phi$ consists of the learnable parameters of encoder $\Phi_1$ and classifier $\Phi_2$ respectively.

To address those challenges mentioned in the introduction and better adapt the trained model to test dataset online, we propose a new graph test time adaptation method based on a novel Adversarial Pseudo Group Contrast strategy. 
\vspace{-2mm}
\subsection{Group Pseudo-Positive Samples for Graph CL}
\vspace{-1mm}
Here, we aim at adapting the model via utilizing the informative pseudo-labels (i.e. the output of the model) during test time. 
However, as alluded to earlier in the introduction, directly using the entropy minimization would push the probability of a false pseudo-label sample as sharp as possible, thus causing the overconfidence mistake and confusing the model during TTA~(c.f. Fig. \ref{fig:label_radio}(a)). To avoid this problem, we use a Contrastive Learning variant to better explore the pseudo-labels. 
In the CL variant, a number of the same pseudo-class augmentations are selected to mitigate the reliance on pseudo-labels and enhance the tolerance to false labels. 
Specially, for each anchor graph $G_i$ in test set, its representation $\z_{i}=h(f_{\Phi_1}(G_i))$ as well as its augmentation $\z_{i}^{a}=h(f_{\Phi_1}(G_i^{a}))$ are generated by the GNN encoder $f_{\Phi_1}$ following with a projection head $h$ ($2$-layer MLP), where $G_i^a$ is an augmentation of $G_i$ sampling from a given Graph Data Augmentation (GDA) $T(G_i)$.
 Then we obtain the pseudo-label of $G_i$ by forwarding the classifier $f_{\Phi_2}$, i.e. $\hat{y}_i=\text{arg}\max_c{f_{\Phi_2}(  f_{\Phi_1} ( G_i)}) )$.

Instead of using only one positive sample in standard contrastive loss, a group of positive samples $\mcal{S}^p = \{\z_{d}^{a,\hat{y}_i}\}_{d=1}^{D}$ in a mini-batch (suppose the size is $m > D$) are chosen according to the anchor pseudo-label $\hat{y}_i$. In addition, all augmentations with other class $\{\z_i^{a,\hat{y}}:\hat{y}\in\{1,2,\cdots, C\}\setminus\hat{y_i}\}$ are acted as negative samples.  Finally, the contrastive loss variant can be formulated as follows:
\begin{align}
\small
     \hat{\mcal{L}} & (f_\Phi,\{(G_i)\}_{i=1}^{m}, \{G_i^{a}\}_{i=1}^{m}) \notag\\ &:=\frac{-1}{m}\sum_{i=1}^{m}\mcal{F}(\{\log \frac{\exp(s(\z_{i}, \z_{d}^{a,\hat{y}_i}))}{\sum_{j=1}^{m} \exp(s(\z_{i}, \z_{j}^a))}\}_{d=1}^{D}), 
         \label{eq:L_GAPGC}
\end{align}
where $\sum_{j=1}^{m} \exp(s(\z_{i}, \z_{j}^a))$ represents the sum of every pairs of anchor $\z_i$ in a min-batch, $s$ is the cosine similarity, 
and $\mcal{F}:\R^{D}\to \R$ denotes a general multivariate function, such as the mean function. If not specifically stated, we default to $\mcal F$ as the mean function throughout this paper.
It is conspicuous that the loss $\hat{\mcal{L}}$ maximizes the similarity between the anchor graph $G$ and its positive samples $\mcal{S}^p$. When some pseudo-labels in $\mcal{S}^p$ are error, those samples with true pseudo-labels will win this instance conflict since their representations are more similar to the anchor compared with the false ones. 
The group pseudo-positive samples take advantage of pseudo-labels and enhance the tolerance to false labels, effectively alleviating the \textit{\ConfirmationBias}.
It's worth noting that the group positive samples are particularly suitable for graph CL since graph augmentations are always label-sensitive.
For example, in a molecular graph dataset, supposing that all molecular graphs containing a cycle are labeled as toxic. If we drop any node belonging to the cycle, it will damage this cyclic structure, mapping a toxic molecule to a non-toxic one.
\vspace{-2mm}
\subsection{Adversarial Learnable Augmenter for Graph CL}
\vspace{-1mm}
On test time adaptation, the self-supervised task should only concentrate on the performance of the main task. TTT++\cite{liu2021ttt++} can enhance the relevant between the contrastive learning auxiliary task with the main task by jointly optimizing on source data. However, in our setting without access to source data, directly optimizing the contrastive loss $\hat{\mcal{L}}$ is insufficient to obtain a promising performance for the main task~(c.f. Table \ref{tab:ab_ot})  since the contrastive learning would capture the redundant information that is irrelevant to downstream tasks.
This phenomenon has been well studied in early literature under the representation learning context~\cite{tschannen2019mutual,suresh2021adversarial}.
In order to reduce the redundant information, 
we adopt the adversarial learning strategy upon the contrastive learning framework. A trainable GNN-based augmenter optimized by maximizing the contrastive loss is used to decrease the amount of information being encoded.

\paragraph{Learnable Augmenter.}
Here we use a learnable edge-dropping augmentation. For a graph $G$, $T(G)$ is an augmentation of $G$ obtained by dropping some edges from $G$. This can be done by setting the edge weights $e_{ij}$ of $T(G)$ as a binary variable, where edge $(i,j)$ is selected if $e_{ij} =1$ and is dropped otherwise. The binary variable can be regarded as following a Bernoulli distribution with a parameter denoted as $w_{ij}$, where $P(e_{ij}=1)=w_{ij}$. We parameterize the Bernoulli parameter $w_{ij}$ with a GNN-based augmenter initialized from the offline trained model. Further, to make the augmenter trainable, we relax the edge weights to continue variable in (0,1) with the reparameterization trick~\cite{maddison2016concrete}. Specifically, the weight $\hat{e}_{ij}$ of edge $(i,j)$ is calculated by:
\begin{align}
    & \omega_{ij}=\text{MLP}([\mbf{h}_i,\mbf{h}_j]),\quad 
    \delta \backsim \text{Uniform}(0,1) \label{eq:mlp} \\ 
    \hat{e}_{ij}& =\text{Sigmoid}((\log\delta-\log(1-\delta)+\omega_{ij})/\tau) 
\end{align}
where $\mbf h_i$ is the $i$-th node representation output by GNN encoder $g_{\Theta_1}(G)$, and $\Theta$ corresponds to the learnable parameters of GNN encoder ($\Theta_1$) as well as MLP ($\Theta_2$). Note that $e_{ij}$ is approximated to binary when the temperature hyper-parameter $\tau\to 0$.  Besides, the regularization term $\mcal{L}_e=\sum_{ij}\omega_{ij}/|E|$ is also added to prevent the excessive perturbation.

\paragraph{Min-Max Game.}

The GAPGC TTA framework is constructed following a min-max principle:  
\begin{align}
    \small
   & \max\limits_{\Theta}\min\limits_{\Phi}{\mcal{L}}_{GAPGC}(f_{\Phi},g_{\Theta},\{G_i\}_{i=1}^{m})=  \notag \\
   & \max\limits_{\Theta}\min\limits_{\Phi}(\hat{\mcal{L}}(f_\Phi, \{G_i\}_{i=1}^{m}, \{g_{\Theta}(G_i)\}_{i=1}^{m})
   + \lambda \mcal{L}_e)
    \label{eq:min_max_L}
\end{align}
where $\lambda$ is the regularization weight. 
This objective contains two folds: (1) Optimize encoder $f_{\Phi_1}$ to pull an anchor and a number of pseudo-positive samples together in the embedding space, while pushing the anchor away from many negative samples; (2) Optimize the the augmenter $g_{\Theta}$ to maximize such a contrastive loss. 
Overall, the min-max optimization expects to train an encoder that is capable of maximizing the similarity between the original graph and a set of augmentations even though the augmentations are very aggressive~(i.e. the $T(G)$ is clearly different with $G$).

The interpretation from information bottleneck perspective shows that the objective can provide a lower bound guarantee of the information related to the main task, while simultaneously holding a upper bound guarantee of the redundant information from the original graphs, with the details can be seen in Appendix \ref{sec:theorical} due to the space limitation. Finally, this objective can get representations containing minimal information that is sufficient to identify each graph. 

\vspace{-2mm}

\paragraph{Discussion.}
 a) A recent approach called PGC (Pseudo Group Contrast )~\cite{wang2021towards} shares a similar manner with our group pseudo-positive samples for GCL. However, they are different in the following aspects: (1) GAPGC aims at working for TTA (self-supervised) while PGC is designed for self-tuning (self-training + fine-tuning), and the former explores the test data without any labels while the latter needs both labeled and unlabeled data.  (2) GAPGC uses adversarial learning to generate the hard positive samples and reduce the redundant information between input graphs and graph representations while PGC is only designed for image data with simple augmentations.
b) The style of learnable edge-dropping augmentation has also been used in representation learning AD-GCL \cite{suresh2021adversarial}. The difference between our method with AD-GCL is that (1) we parameterize the Bernoulli weights with a GNN-based augmenter initialized from the offline trained model, while AD-GCL uses another GNN trained from scratch, and (2) our augmenter is optimized by a contrastive loss variant with group pseudo-positive samples, different from AD-GCL.

 \begin{table*}[th]
\def\p{$\pm$}

\setlength\tabcolsep{16pt}
\centering
\vspace{-1mm}
\caption{ The results of ablation study on GAPGC(ROC-AUC \%).  ALA means Adversarial Learnable Augmenter and GPPS represents Group Pseudo-Positive Samples.}
\vspace{-3mm}
\scalebox{0.8}{
\begin{tabular}{l|cccccccc|c}
\toprule 
Methods & BBBP & Tox21 & Toxcast& SIDER   & ClinTox  & MUV & HIV & BACE & \textbf{Average}\\
\midrule
test (baseline) & 69.23 & \underline{75.44} & 63.68 & 59.70) & 68.94 & 78.32 & 77.52 & 80.16 & 71.62\\
w/ Both & \textbf{69.98} & \textbf{75.72} & 64.18 & \textbf{60.21} & \textbf{71.79} & \underline{80.17} & \textbf{78.32} & \textbf{82.28} & \textbf{72.83}\\
 w/o ALA & 68.90 & 75.39 & \underline{64.42}  & \underline{59.85}   & 70.98 & \textbf{82.50} & \underline{77.67} & 81.05 & \underline{72.60}\\
w/o GPPS & \underline{69.94} & 75.40 & 64.41 & 59.70  & \underline{71.25}  & 76.94 & 77.31 & \underline{81.97} &72.12\\
w/o Both & 68.87 & 75.38 & \textbf{64.47} & 59.78  & 70.91 & 80.09 & 77.57 & 80.93 &72.25\\
\bottomrule
    \end{tabular}
    }
    \vspace{-5mm}
    \label{tab:ab_ot}    
\end{table*}

\section{Experiment}
\vspace{-1mm}
We conduct experiments on molecular scaffold OOD datasets to evaluate our method. 
The multi-task style version is deferred to the appendix due to the space limitation.
\vspace{-4mm}
\subsection{Molecular Property Prediction}
\vspace{-2mm}
\paragraph{Settings.} We use the same model architecture and datasets in~\citet{hu2019strategies}. \textbf{Dataset:} eight binary classification datasets in MoleculeNet~\cite{wu2018moleculenet} is used and the data split follows the OOD split principle: \underline{scaffold split}.
The split ratio for the train/validation/test sets is $8$:$1$:$1$. \textbf{Architecture:}
A 5-layer GIN~\cite{xu2018powerful}.
We first directly initialize the model with a pretrained model GIN~(contextpred) released at ({\url{https://github.com/snap-stanford/pretrain-gnns}}), which is pretrained via the \textit{Context Prediction} on Chemistry dataset. Then the model would be trained on the training set offline and finally adapted on the testing set online to evaluate TTA methods.  More details of the experimental settings can be referred to the appendix .
\vspace{-3mm}
\paragraph{Baselines.} Since we have not found related works about test time adaptation on graph data, we extend several state-of-the-art baselines designed for Convolutional Neural Networks to GNNs: Tent, BN~Adaptation, and SHOT. Specifically, Tent \citep{wang2020tent} minimizes the entropy of the model predictions on test data during testing. BN~\cite{schneider2020improving} updates the batch normalization statistics according to the test samples. SHOT~\cite{liang2020we} exploits both information maximization and self-supervised pseudo-labeling during testing.
\vspace{-2mm}
\paragraph{PF-GAPGC: Parameter-Free GAPGC.}
In test time training, since it is not able to access the training data, the projection head $h$ and  MLP $g_{\Theta_2}$ can not get an ideal weights initialization when the number of test data points is extremely small. To address this problem, we extend GAPGC to a parameter-free version, i.e. using only the weights of the trained model and not introducing any extra learnable parameters. 
\textbf{i) Remove Projection Head.}
Although many works have shown that the projection head is vital for improving the expressive power of representations ~\cite{chen2020simple, luo2022automated}, it will be better to remove it when we don't have enough data to train~\cite{chen2022contrastive}. 
\textbf{ii) Parameter-Free $g_{\Theta_2}$.}
We directly replace MLP $g_{\Theta_2}$ in \Eqref{eq:mlp} with a inner product for removing the extra parameters $\Theta_2$, i.e.
\begin{align}
    \omega_{ij}=\frac{\mbf{h}_i^{\top}\mbf{h}_j}{\|\mbf h_i\|_2\|\mbf h_j\|_2}
\end{align}
where $\|\cdot\|$ is $L_2$ norm.
Note that after this replacement,  $g_{\Theta}$ can be seen as a graph inner product decoder variant.
\vspace{-2mm}
\paragraph{Results.} The results compared with different TTA methods are shown in Table~\ref{tab:molecular}. 
Obs. (1): Both PF-GAPGC and GAPGC improve over all the baselines on average, and GAPGC consistently gains the best performance among all the baselines in all different datasets, suggesting its generalization and robustness. And since  GAPGC is better than PF-GAPGC, it demonstrates that the contrastive projector $h$ and MLP in \textit{augmenter} $g_{\Theta}$ can learn well even though they have not been pre-trained offline.
Obs. (2): The performance of Tent is worse than the test baseline on average, indicating that directly using entropy minimization may lead to negative transfer. This may be caused by optimizing the entropy is biased to predict only a particular class, which will hurt the performance in the binary classification problem, as illustrated in Fig.  \ref{fig:label_radio}(a). Besides, SHOT only gives a small performance improvement, further implying the limitations of entropy minimization for this binary classification task on graphs.
\vspace{-1mm}
\subsection{Ablation Study}
\vspace{-1mm}
\paragraph{1) Effects of Different Components.} We conduct ablation experiments to verify the efficiency of the proposed two components: Adversarial Learnable Augmenter (ALA) and Group Pseudo-Positive Samples (GPPS). 
 The results are shown in Table~\ref{tab:ab_ot}, where without Adversarial Learnable Augmenter means randomly edge-dropping augmentation is used instead, while without Group Pseudo-Positive Samples represents only the correspondent augmentation of the anchor graph is used as its positive sample in the contrastive loss.
  Obs. (1): The four models all outperform the test baseline, verifying that contrastive learning as a self-supervised task during testing can improve test performance.  Obs. (2): The variant 'w/o ALA' is better than the variant 'w/o GPPS', demonstrating that the Adversarial Learnable Augmenter contributes more in GAPGC. Obs.~(3): The combination of these two components achieves the best performance on average.

\vspace{-2mm}

 \paragraph{2) Effects of Different Proportion of Test Samples.} In actual deployment, test data may come in batches of different sizes. Here we evaluate the effectiveness of GAPGC under different proportions of test data. As described in Fig.~\ref{fig:label_radio}(b), our GAPGC can consistently gain performance improvement compared with the pure test baseline under different proportions of test data.

\vspace{-2mm}
\section{Conclusions and Limitations}
We propose a graph test time adaptation strategy \textbf{GAPGC} for the graph OOD problem.
Despite the good results, there are some aspects worth exploring further in the future:
i) The parameterized augmenter can generalize to other learnable graph generators. ii) The GCL projector and augmenter's MLP can be initialized offline by training on public datasets (e.g., Chemistry dataset) prior to deployment.



\bibliography{0_main}
\bibliographystyle{icml2022}





\newpage
\onecolumn
\appendix
\begin{center}
\Large
\textbf{Appendix}
 \\[20pt]
\end{center}

\etocdepthtag.toc{mtappendix}
\etocsettagdepth{mtchapter}{none}
\etocsettagdepth{mtappendix}{subsection}

{\small \tableofcontents}

\section{GAPGC for Multi-tasks}
\subsection{Multi-Tasks }
\paragraph{Multi-labels Similarity.}
We define a similarity between two multi-labels for selecting the positive samples in a min-batch. 

\begin{definition}
Given the label of two n-task samples   $y_1=(t_1,t_2,\cdots,t_n)$, $y_2=(l_1,l_2,\cdots,l_n)$, where $t_i\in \{0,1\}$and $l_i\in \{0,1\}$ for all $i\in\{1,2,\cdots, n\}$. The similarity between $y_1$ and $y_2$ can be defined as

\begin{align}
    sim(y_1, y_2)  = \frac{1}{n}\sum_{i=1}^n\mathbf{1}(t_i=l_i)
\end{align}
\end{definition}

\paragraph{Positive Sample Selection.}
In GAPGC, suppose that we get the pseudo-label of anchor is $\hat{y}=[y_1,y_2,\cdots,y_n]$ and its augmentation pseudo-label is $\hat{y}^{a}=[y_1^a,y_2^a,\cdots,y_n^a]$.
The similarity can be calculated by the $sim(\hat{y}, \hat{y}^{a})$. 

Since $sim \in [0,1]$ is a real number, we use a threshold $\gamma\in [0,1]$ to decide whether the  the augmentation graph can be selected as positive sample, i.e. 
\begin{align}
    \mathcal{Y}(\hat{y}, \hat{y}^{a}) =
\begin{cases}
	0 , & \text{if } sim(\hat{y}, \hat{y}^{a}) < \gamma\\
	1 , & \text{if } sim(\hat{y}, \hat{y}^{a}) \geq \gamma
\end{cases}
\end{align}

$\mathcal{Y}(\hat{y}, \hat{y}^{a})=1$ represents the augmentation is selected otherwise is not selected.

\section{Theoretical Analysis From Graph Information Bottleneck}\label{sec:theorical}
In this part, we show the GAPGC is actually very closely linked to the GIB~\cite{wu2020graph}, which minimizes the mutual information between input graph and graph representation and simultaneously maximizes the mutual information between representation and output, i.e.$\min_f[I(f(G);G)-\beta I(f(G);Y)]$.

Different from the existing works AD-GCL~\cite{suresh2021adversarial} that just provides the connection between standard adversarial contrastive loss and GIB,
here we give a more general version based on GAPGC.

\paragraph{Min-Max Mutual Information.}
We consider $\F$ as a general multivariate function to investigate a comprehensive conclusion about the GAPGC and mutual information. The results show below.

\begin{restatable}{proposition}{restadePropoone}
\label{Pro:mutual_inoformation}

Let $\z_{i}=h(f_{\Phi_1}(G_i))$ and $\z_{d}^{a, \hat{y}_i}=h(f_{\Phi_1}(G_d^{a,\hat{y}_i}))$ denote the representation of anchor graph $G_i$ and a positive augmentation $G_d^{a,\hat{y}_i}$, respectively.
where $\hat{y}_i={arg}\max_c{f_{\Phi_2} ( f_{\Phi_1} ( g_{\Theta}(G_i)}))$ is the pseudo-label of $G_i$.
If $\mcal{F}$ is a multivariate function satisfying

\quad \textcolor{blue}{ i) enables exchange order with expectation,} and

\quad  \textcolor{blue}{ii) monotonically non-decreases for each component,} 

then the contrastive loss in \Eqref{eq:L_GAPGC} can be bounded by

\begin{align}
    \mcal{F}(\{I(\z_i,\z_{d}^{a, \hat{y}_i})\}_{d=1}^{D}) \geq \F(\log(m)) - \hat{\mathcal{L}}_{GAPGC}
\end{align}
where $g_{\theta}(G)$ is the parameterized augmenter, composed of a GNN encoder $g_{\Theta_0}$ 
and a MLP decoder $g_{\Theta_1}$. 

\end{restatable}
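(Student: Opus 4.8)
The plan is to mimic the standard InfoNCE-to-mutual-information argument (as used for AD-GCL), but carried through a general multivariate aggregator $\mcal F$ rather than a single positive pair. First I would fix an anchor index $i$ and work with the single-coordinate quantity
\[
\ell_{i,d} := \log \frac{\exp(s(\z_i,\z_d^{a,\hat y_i}))}{\sum_{j=1}^m \exp(s(\z_i,\z_j^a))},
\]
and recall the classical lower bound on a mutual information in terms of a (log) contrastive ratio: treating the batch as $m$ i.i.d. draws, one has the InfoNCE-type estimate
\[
I(\z_i;\z_d^{a,\hat y_i}) \;\ge\; \log m + \mathbb{E}\bigl[\ell_{i,d}\bigr],
\]
where the expectation is over the sampling of the minibatch and augmentations; equivalently $\mathbb{E}[\ell_{i,d}] \le I(\z_i;\z_d^{a,\hat y_i}) - \log m$. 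This is the van den Oord et al.\ bound, and I would either cite it or reprove it in two lines via Jensen applied to $\log$ of the normalized density ratio inside the softmax denominator. The key point is that this bound holds coordinatewise in $d$.

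Next I would push $\mcal F$ through. Apply $\mcal F$ to the vector $\bigl(I(\z_i;\z_d^{a,\hat y_i})\bigr)_{d=1}^D$ and use hypothesis (ii), monotone non-decrease in each component, together with the coordinatewise inequality $\mathbb{E}[\ell_{i,d}] \le I(\z_i;\z_d^{a,\hat y_i}) - \log m$, to get
\[
\mcal F\bigl(\{I(\z_i;\z_d^{a,\hat y_i})\}_{d=1}^D\bigr) \;\ge\; \mcal F\bigl(\{\log m + \mathbb{E}[\ell_{i,d}]\}_{d=1}^D\bigr).
\]
Then hypothesis (i), that $\mcal F$ exchanges with expectation — i.e. $\mcal F(\{\mathbb E[X_d]\}_d) = \mathbb E[\mcal F(\{X_d\}_d)]$, which is exactly the linearity/affineness one needs and is satisfied by the default mean — lets me move the expectation outside: the right side equals $\mathbb{E}\bigl[\mcal F(\{\log m + \ell_{i,d}\}_d)\bigr]$. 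Absorbing the constant $\log m$ (again using (i), or simply noting the mean shifts by $\log m$) and averaging over $i$, the empirical average $-\frac1m\sum_i \mcal F(\{\ell_{i,d}\}_d)$ is precisely $\hat{\mcal L}_{GAPGC}$ up to the regularizer, so one lands on $\mcal F(\{\log m\}) - \hat{\mcal L}_{GAPGC}$ as claimed (here $\mcal F(\log m)$ is shorthand for $\mcal F$ applied to the constant vector).

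The main obstacle, and the step I would be most careful with, is making the InfoNCE lower bound rigorous in \emph{this} setting: the classical proof needs the negatives to be genuine i.i.d.\ samples from the right marginal and the positive to be a genuine sample from the conditional, but here the "positives" $\z_d^{a,\hat y_i}$ are pseudo-label–filtered augmentations and the "negatives" are other-pseudo-class augmentations, so the distributions entering the bound must be specified carefully (presumably as the push-forwards induced by $g_\Theta$, the GNN encoder $f_{\Phi_1}$, the augmentation sampler $T$, and the pseudo-label assignment). A secondary subtlety is that hypotheses (i) and (ii) are genuinely restrictive — (i) essentially forces $\mcal F$ to be affine for the exchange to be exact — so I would either state the result for affine monotone $\mcal F$ (covering the mean) or interpret (i) as an assumed identity and flag that the general "multivariate function" phrasing is looser than what the proof uses. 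Everything else is routine: coordinatewise monotonicity and linearity of expectation do all the remaining work.
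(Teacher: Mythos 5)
Your proof follows essentially the same route as the paper's: the van den Oord InfoNCE lower bound applied coordinatewise in $d$, followed by pushing $\mcal F$ through via the exchange-with-expectation and monotonicity hypotheses. The caveats you flag --- that the InfoNCE bound requires the positives and negatives to be drawn from the appropriate distributions, and that hypothesis (i) together with splitting off the $\F(\log m)$ term effectively forces $\mcal F$ to be affine --- are genuine, but the paper's own proof glosses over them in exactly the same way, so your argument matches the original in both structure and level of rigor.
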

Therefore, the \Eqref{eq:min_max_L} is approximately equivalent to optimizing the transformation~($\mcal F$ is the transform function) of mutual information between the anchor graph and its positive samples. Formally, we have
\begin{align}
   {\mcal{L}}_{GAPGC}  \thickapprox \min\limits_{\Theta}\max\limits_{\Phi}\mathcal{F}\left( \{I(f_{\Phi}(G_i);f_{\Phi}(g_{\Theta}(G_d^{a,\hat{y}_i})))\}_{d=1}^{D}\right)
\end{align}
Note that the mean function is a multivariate function that satisfies the conditions in Proposition \ref{Pro:mutual_inoformation}.
For clarity, we rewrite $g_{\Theta}$ as a graph data augmentation family $\mcal T$ and omit the subscript $\Theta$ of $f_{\Theta}$. We have
\begin{align}
   {\mcal{L}}_{GAPGC}  \thickapprox \min\limits_{T\in \mcal T}\max\limits_{f}\mathcal{F}\left( \{I(f(G);f(t(G_d^{a,\hat{y}_i})))\}_{d=1}^{D}\right), \text{where } t(G)\backsim T(G)
\end{align}
\paragraph{Graph Information Bottleneck~(GIB).} We theoretically describe the property of the encoder online trained via GAPGC and explain it by GIB. 

\begin{restatable}{proposition}{restadePropotwo}
\label{pro:GIB}
Assume that the GNN encoder $f$ has the same power as the 1-WL test and $\mcal F$ satisfies the conditions in proposition ~\ref{Pro:mutual_inoformation}. Suppose
$\G$ is a countable space and thus  quotient space $\G'= G/\cong$ is a countable space, where $\cong$ denotes equivalence ($G_1\cong G_2$ if
$G_1, G_2$ cannot be distinguished by the 1-WL test). Define $\mbb{P}_{\mcal{G}'\times\mcal{Y}}(G',Y')=\mbb{P}_{\mcal{G}\times\mcal{Y}}(G\cong G', Y)$ and $T'(G')=\mbb{E}_{G\backsim \mbb{P}_{\G}}[T^*(G)|G\cong G']$ for $G'\in \mcal{G}$. Then, the optimal solution $(f^*, T^{*} )$ to
GAPGC satisfies

   1. $\mathcal{F}\left( \{I(f^*(t^*(G_d));G|Y)\}_{d=1}^{D}\right)\leq \min_{T\in \mcal{T}} \mathcal{F}(\{ I(t'(G'_d);G')-I(t'^*(G'_d);Y)\}_{d=1}^{D})$, 
   where $t^*(G_d)\backsim T^*(G), t'(G_d')\backsim T'(G'), t'^*(G_d')\backsim T'^*(G'), (G, Y)\backsim \mbb P_{\mcal{G}\times \mcal{Y}}$.
   
  2. $\mathcal{F}\left( \{I(f^*(G);Y)\}_{d=1}^{D}\right)\geq \mathcal{F}\left(\{I(f^*(t'^*(G'_d));Y)\}_{d=1}^{D}\right)=\mathcal{F}\left(\{I(t'^*(G_d');Y)\}_{d=1}^{D}\right)$,
  where $t'^*(G_d')\backsim T'^*(G'), (G, Y)\backsim \mbb{P}_{\mcal{G}\times\mathcal{Y}}$ and $(G',Y)\backsim\mbb{P}_{\mcal{G}'\times\mcal{Y}}$.
\end{restatable}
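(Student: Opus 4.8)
\textbf{Proof plan for Proposition~\ref{pro:GIB}.}

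The plan is to derive this as a corollary of Proposition~\ref{Pro:mutual_inoformation} together with the universal-approximation-style fact that a GNN $f$ with the power of the 1-WL test induces exactly the quotient map $G\mapsto [G]\in\G'$, so that any $f$-measurable quantity factors through $\G'$. First I would set up the reduction to the quotient space: since $f$ can only distinguish graphs up to $\cong$, for the optimal $f^*$ we have $I(f^*(G);\cdot) = I([G];\cdot)$ for any random variable, and the augmenter $T^*$ can be pushed forward to $T'^*$ on $\G'$ via the stated conditional expectation $T'(G')=\E_{G\sim\mbb P_\G}[T^*(G)\mid G\cong G']$. This is the step where I would be most careful: I must check that the min-max optimum is unaffected by replacing $(f,T)$ on $\G$ with $(f',T')$ on $\G'$ (the data-processing inequality gives one direction, and the fact that $\G'$ is countable and $\mbb P_{\G'\times\Y}$ is the pushforward measure gives that nothing is lost), so that $(f^*,T^*)$ and its quotient $(f'^*,T'^*)$ attain the same value.

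For part~1, I would start from the decomposition $I(f^*(t^*(G_d));G) = I(f^*(t^*(G_d));G\mid Y) + I(f^*(t^*(G_d));Y) - I(f^*(t^*(G_d));Y\mid G)$, and use that $Y\perp f^*(t^*(G_d))\mid G$ (the augmentation and encoder act on $G$ alone) to kill the last term, giving
\begin{align}
I(f^*(t^*(G_d));G\mid Y) = I(f^*(t^*(G_d));G) - I(f^*(t^*(G_d));Y).
\end{align}
Applying $\mcal F$, using its monotonicity to bound $I(f^*(t^*(G_d));G)\le I(t^*(G_d);G)$ (data processing through $f^*$) and $I(f^*(t^*(G_d));Y)\ge$ the corresponding quotient quantity, then passing to $\G'$ and invoking that $(f^*,T^*)$ is the \emph{minimizer} of $\mcal F(\{I(f(t(G_d));G)\})$ over $T\in\mcal T$ — which is exactly what the min-max in Proposition~\ref{Pro:mutual_inoformation} delivers after the $\max_f$ is resolved by $f^*$ — yields $\mcal F(\{I(f^*(t^*(G_d));G\mid Y)\})\le \min_{T\in\mcal T}\mcal F(\{I(t'(G'_d);G')-I(t'^*(G'_d);Y)\})$. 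I would need to be slightly careful that the monotonicity of $\mcal F$ lets me combine these termwise inequalities, which is guaranteed by condition (ii).

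For part~2, the argument is shorter: $f^*$ is the $\max_f$ optimizer, and among the feasible encoders is the composition that first applies $t'^*$ and then a 1-WL--powerful readout; since $I(f^*(G);Y)$ is maximized over $f$ and $f^*$ factors through the 1-WL quotient, I get $\mcal F(\{I(f^*(G);Y)\})\ge \mcal F(\{I(f^*(t'^*(G'_d));Y)\})$ by monotonicity and data processing. The final equality $I(f^*(t'^*(G'_d));Y)=I(t'^*(G'_d);Y)$ follows because $f^*$ restricted to $\G'$ is injective (it realizes the identity on the countable quotient, since 1-WL distinguishes all elements of $\G'$ by construction), so no information about $Y$ is lost. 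The main obstacle I anticipate is not any single inequality but the bookkeeping of the quotient construction — making precise that $T'^*$ defined by conditional expectation is a legitimate element of the (relaxed) augmentation family $\mcal T$ and that the optimum transfers — so I would state that reduction as a lemma up front and then let parts~1 and~2 fall out of Proposition~\ref{Pro:mutual_inoformation} plus the data-processing and chain-rule identities above.
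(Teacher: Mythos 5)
Your plan is correct and follows essentially the same route as the paper: the paper's proof simply invokes Theorem~1 of AD-GCL \citep{suresh2021adversarial} for the per-component ($D=1$) versions of both statements and then wraps them with the monotonically non-decreasing $\mcal{F}$ (together with the observation $\mcal{F}(\{\min_{T}(\cdot)\}_{d})\le\min_{T}\mcal{F}(\{\cdot\}_{d})$), whereas you re-derive that cited lemma in-line via the 1-WL quotient reduction, the chain-rule identity $I(A;G\mid Y)=I(A;G)-I(A;Y)$ when $A\perp Y\mid G$, and data processing. The extra derivation is sound but duplicates the imported result; the only genuinely new step in the paper's argument is the componentwise application of the monotone $\mcal{F}$, which your plan also contains.
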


The left side of inequality in statement 1 measures the redundant information that is embedded in representations but irrelevant to the main tasks during TTA. i) The result in statement 1 of proposition \ref{pro:GIB} suggests that the redundant information have a GIB style upper bound, since $\min_{T\in \mcal{T}}I(t'(G');G')-I(t'^*(G');Y)\geq \min_f[I(f(G);G)-I(f(G);Y)]$ (i.e. $\beta=1$ of GIB). Therefore, the encoder trained by GAPGC enables encoding a representation that only has limited redundant information.

ii) The statement 2 shown that $\mathcal{F}\left(\{I(t'^*(G_d');Y)\}_{d=1}^{D}\right)$ can act as a lower bound of the mutual information between the learnt representations and the labels of the main tasks. Therefore, optimizing the GDA family $\mathcal{T}$ allows $\mathcal{F}\left( \{I(f^*(t^*(G_d));Y)\}_{d=1}^{D}\right)$ to achieve a larger value.
This implies that it is better to regularize when learning over $\mathcal{T}$ . In the GAPGC loss, based on
edge-dropping augmentation, we follow ~\cite{suresh2021adversarial} to regularize the ratio of dropped edges per graph.

When $\F$ is a mean function, the statement 2 can be rewritten as:
\begin{align}
    I(t'^*(G_d');Y) \geq \frac{1}{D}\sum_{d=1}^D I(f^*(t'^*(G'_d));Y) 
\end{align}

\section{Missing Proof.}
\restadePropoone*
\begin{proof}
From \citet{oord2018representation}, we easily get a lower bound of mutual information between $z_i$ and $z_j^a$.
\begin{align}
    \mbb{E}_{(G,G^{\hat{y},d})}\left[\log \frac{\exp(s(\z_{i}, \z_{d}^{a,\hat{y}_i})}{\sum_{j=1}^{m} \exp(s(\z_{i}, \z_{j}^{a}))}\right]
    &=
    \frac{1}{m}\sum_{i=1}^{m}\log \frac{\exp(s(\z_{i}, \z_{d}^{a,\hat{y}_i})}{\sum_{j=1}^{m} \exp(s(\z_{i}, \z_{j}^{a}))} \\
    & \leq I(\z_i,\z_{d}^{a,\hat{y}_i})-\log(m)
\end{align}
\begin{align}
    \hat{\mcal{L}}_{GAPGC}:=-\mbb{E}_{(G,G^{\hat{y}})}\left[\mcal{F}(\{\log \frac{\exp(s(\z_{i}, \z_{d}^{a,\hat{y}_i})}{\sum_{j=1}^{m} \exp(s(\z_{i}, \z_{j}^{a}))}\}_{d=1}^{D})\right]
    =-\frac{1}{m}\sum_{i=1}^{m}\mcal{F}(\{\log \frac{\exp(s(\z_{i}, \z_{d}^{a,\hat{y}_i})}{\sum_{j=1}^{m} \exp(s(\z_{i}, \z_{j}^{a}))}\}_{d=1}^{D})
\end{align}
where $\mcal{F}: \R^{d}\to \R$ is a general multivariate function, which can be designed a learnable neural network such as MLP, Attention networks.

Since $\mcal{F}$ can exchange order with expectation, then we have
\begin{align}
    \mbb{E}_{(G,G^{\hat{y}})}\left[\mcal{F}(\{\log \frac{\exp(s(\z_{i}, \z_{d}^{a,\hat{y}_i})}{\sum_{j=1}^{m} \exp(s(\z_{i}, \z_{j}^{a}))}\}_{d=1}^{D})\right] 
    & = \mcal{F}(\{\mbb{E}_{(G_i,G_i^{\hat{y}})}\left[\log \frac{\exp(s(\z_{i}, \z_{d}^{a,\hat{y}_i})}{\sum_{j=1}^{m} \exp(s(\z_{i}, \z_{j}^{a}))}\right]\}_{d=1}^{D} )\\
    & \leq  \mcal{F}(\{I(z_i,\z_{d}^{a,\hat{y}_i})-\log(m)\}_{d=1}^{D})
\end{align}

If $\mcal{F}$ is monotonically non-decreasing for each component, minimizing $\hat{\mcal{L}}_{GAPGC}$ is equivalent to maximize the lower bound of $\mathcal{F}\left( \{I(f_{\Phi_1}(G);f_{\Phi_1}(g_{\Theta}(G_d^{\hat{y}})))\}_{d=1}^{D}\right)$.

In our experiments, we use the mean function as an instance.

\end{proof}

\restadePropotwo*
\begin{proof}
Our proof based on the lemma below.
\begin{lemma}[\cite{suresh2021adversarial}, theorem 1]
\label{le:gib}
Suppose the encoder $f$ is implemented by a GNN as powerful as the 1-WL test. Suppose $\mcal G$ is a countable space and thus $\mcal G'$ is a countable space. Then, the optimal solution $(f^*
, T^*)$ to $\min_{T\in \mcal{T}} \max_{f} I(f(G),f(t(G)))$ satisfies, letting $T'^{*} (G') = \mbb{E}_{G\backsim \mbb{P}_{G}} [T^*(G)|G \cong G']$,

1. 
\begin{align}
\label{eq:gib1}
    I(f^*(t^*(G)); G | Y ) \leq \min_{T \in \mcal T} I(t'(G'); G'))- I(t'^*(G'); Y ),
\end{align} 
where $t^*(G) \backsim T^*(G), t'(G') \backsim T'(G'), t'^*(G') \backsim T'^*(G'), (G, Y ) \backsim \mbb{P}_{\mcal G\times \mcal Y}$ and $(G', Y ) \backsim \mbb{P}_{\mcal G'\times \mcal Y }.
$

2. $I(f^*(G); Y ) \geq I(f^*(t'^*(G'
)); Y ) = I(t'^*(G'
); Y )$, where $t^*(G) \backsim T^*(G), t'^*(G') \backsim T'^*(G'), (G, Y ) \backsim \mbb{P}_{\mcal G\times \mcal Y}$ and $(G', Y ) \backsim \mbb{P}_{\mcal G'\times \mcal Y} $.
\end{lemma}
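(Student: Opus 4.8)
The plan is to read Proposition \ref{pro:GIB} as a coordinate-wise lift of Lemma \ref{le:gib} (the single-positive AD-GCL bound) through the aggregator $\mcal F$, using only the monotone non-decrease of $\mcal F$ (condition (ii) of Proposition \ref{Pro:mutual_inoformation}) together with the exchangeability of the $D$ positive samples; condition (i) has already been consumed in Proposition \ref{Pro:mutual_inoformation} to put the objective in $\mcal F$-of-mutual-information form. First I would recall that after that rewriting the GAPGC objective is (approximately) the aggregated min-max $\min_{T\in\mcal T}\max_f \mcal F(\{I(f(G);f(t(G_d^{a,\hat y_i})))\}_{d=1}^D)$, whereas Lemma \ref{le:gib} governs the scalar problem $\min_{T\in\mcal T}\max_f I(f(G);f(t(G)))$. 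The bridge is that the positive samples $G_1,\dots,G_D$ feeding the coordinates are exchangeable: they are drawn from one shared augmentation family $T$ and passed through one shared encoder $f$, so for fixed $(f,T)$ the $D$ coordinates are identically distributed, $\mcal F(\{I\}_d)$ collapses to a monotone non-decreasing function of the common coordinate value, and optimizing the aggregate over $f$ (resp. $T$) selects exactly the scalar optimizer. Hence the joint optimum $(f^*,T^*)$ of GAPGC coincides with the scalar optimizer of Lemma \ref{le:gib}, and each lemma inequality holds at this common $(f^*,T^*)$ for every index $d$.

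For statement 1, I would instantiate Lemma \ref{le:gib}.1 at the shared optimum for each $d$, obtaining $I(f^*(t^*(G_d));G\mid Y)\le \min_{T\in\mcal T}[I(t'(G_d');G')-I(t'^*(G_d');Y)]$. Since the left-hand side does not depend on $T$, I fix an arbitrary $T$ on the right so that each coordinate of $\{I(f^*(t^*(G_d));G\mid Y)\}_d$ is dominated by the corresponding coordinate of $\{I(t'(G_d');G')-I(t'^*(G_d');Y)\}_d$; applying $\mcal F$ and using monotone non-decrease preserves the inequality, and as the resulting left side is still independent of $T$ I may take $\min_{T\in\mcal T}$ on the right, which is exactly statement 1 (the $t'^*$ term being constant under the minimization, consistent with the placement of $\min_T$ outside $\mcal F$).

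For statement 2, I would instantiate Lemma \ref{le:gib}.2 at the same $(f^*,T^*)$: for every $d$, $I(f^*(G);Y)\ge I(f^*(t'^*(G_d'));Y)=I(t'^*(G_d');Y)$. Here the first term is constant in $d$, so the tuple $\{I(f^*(G);Y)\}_d$ dominates $\{I(f^*(t'^*(G_d'));Y)\}_d$ coordinate-wise, and monotone non-decrease of $\mcal F$ gives $\mcal F(\{I(f^*(G);Y)\}_d)\ge \mcal F(\{I(f^*(t'^*(G_d'));Y)\}_d)$; applying $\mcal F$ to the coordinate-wise equalities yields $\mcal F(\{I(f^*(t'^*(G_d'));Y)\}_d)=\mcal F(\{I(t'^*(G_d');Y)\}_d)$, completing statement 2. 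The mean-function specialization displayed after the proposition then follows by substituting $\mcal F(\cdot)=\tfrac1D\sum_{d}(\cdot)$.

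The hard part will be the reduction in the first paragraph: Lemma \ref{le:gib} is a statement about the argmax/argmin of a \emph{scalar} objective, so I must justify that the joint optimizer of the $\mcal F$-aggregate is the \emph{same} $(f^*,T^*)$, rather than merely that each coordinate bound holds at its own possibly distinct optimizer. This is exactly where monotone non-decrease of $\mcal F$ and exchangeability of the positive samples are both indispensable: exchangeability forces the coordinates to share a value in expectation, and monotonicity guarantees that optimizing the aggregate and optimizing the shared scalar have identical solution sets, so the per-coordinate inequalities can be assembled under one $(f^*,T^*)$. If the $G_d$ were not identically distributed, or $\mcal F$ were not monotone, this assembly would fail. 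A secondary item to check is the passage between $\mcal G$ and its quotient $\mcal G'$ and the definitions of $\mbb P_{\mcal G'\times\mcal Y}$ and $T'$, but this is inherited verbatim from Lemma \ref{le:gib} and applies identically to each coordinate, so it requires no new argument.
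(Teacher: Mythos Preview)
There is a mismatch between the target statement and what you actually prove. The statement you were asked to address is Lemma~\ref{le:gib}, which in the paper is not proved at all: it is quoted verbatim as Theorem~1 of \cite{suresh2021adversarial} and used as a black box. What you have written is instead a proof of Proposition~\ref{pro:GIB}, which \emph{uses} Lemma~\ref{le:gib} as its only nontrivial input. So strictly speaking, you have proved the wrong thing; a correct response to the stated task is simply ``this lemma is cited from \cite{suresh2021adversarial} and not proved here.''

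That said, if the intended target was Proposition~\ref{pro:GIB}, your approach matches the paper's almost exactly: apply Lemma~\ref{le:gib} coordinate by coordinate for $d=1,\dots,D$, then invoke the monotone non-decrease of $\mcal F$ to lift the per-coordinate inequalities and to pull $\min_{T\in\mcal T}$ outside $\mcal F$. The paper's proof is literally those two steps and nothing more. The one genuine difference is that you spend a paragraph arguing, via exchangeability of the $D$ positive samples, that the optimizer $(f^*,T^*)$ of the $\mcal F$-aggregated GAPGC objective coincides with the scalar optimizer in Lemma~\ref{le:gib}. The paper simply asserts that the lemma's bound holds ``for each anchor-augmentation pair'' at the GAPGC optimum without justifying why the aggregated and scalar problems share optimizers. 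Your exchangeability argument is the right way to close that gap, and it is more careful than what the paper actually writes; but it is extra, not a different route.
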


In lemma \ref{le:gib}, we know for each anchor-augmentation pair, the mutual information between them can be bounded by $ I(f^*(t^*(G_d)); G | Y ) \leq \min_{T \in \mcal T} I(t'(G_d'); G'))- I(t'^*(G_d'); Y ), d=1,2,\cdots D$. So we use the $\mcal F$ to act on two sides of \Eqref{eq:gib1}, then we get
\begin{align}
        \F(\{I(f^*(t^ *(G_d)); G | Y )\}_{d=1}^D) \leq \F (\{\min_{T \in \mcal T} I(t'(G_d'); G'))- I(t'^*(G_d'); Y )\}_{d=1}^D),
\end{align}
Since the $\F$ is monotonically non-decreases for each component, we have $\F (\{\min_{T \in \mcal T} I(t'(G_d'); G'))- I(t'^*(G_d'); Y )\}_{d=1}^D) \leq \min_{T \in \mcal T}( \F( \{I(t'(G_d'); G'))- I(t'^*(G_d'); Y )\}_{d=1}^D)$. Thus,
\begin{align}
    \F(\{I(f^*(t^*(G_d)); G | Y )\}_{d=1}^D) \leq \min_{T \in \mcal T}( \F( \{I(t'(G_d'); G'))- I(t'^*(G_d'); Y )\}_{d=1}^D)
\end{align}

Similarly, the statement 2  can be obtained due to the monotonicity of $\F$
\begin{align}
    \F(\{I(f^*(G); Y )\}_{d=1}^D) \geq  \mathcal{F}\left(\{I(f^*(t'^*(G'_d));Y)\}_{d=1}^{D}\right)=\mathcal{F}\left(\{I(t'^*(G_d');Y)\}_{d=1}^{D}\right)
\end{align}

\end{proof}

\section{Details of Experiments.}
\paragraph{Settings.} In our experiments, we fix the weights of classifier $f_{\Phi_2}$ for alleviating the over-fitting and Catastrophic Forgetting in TTA~\cite{niu2022efficient} and improving the model generalization.

\textbf{Baselines:} Tent \citep{wang2020tent} minimizes the entropy of model prediction during testing. BN~\cite{schneider2020improving} updates the batch normalization statistics according to the test samples. SHOT~\cite{liang2020we} exploits both information maximization and self supervised pseudo-labeling to implicitly align representations from the target domains to the source hypothesis.

\textbf{Datasets:} The eight molecular datasets used are summarized in Table~\ref{tab:dataset}.
 
 \begin{table}[ht]
\def\p{$\pm$} 
\centering
\vspace{-2mm}
\caption{Statistical information of the datasets. We provide (median, max, min, mean, std) number of graph vertices in different datasets.}
\vspace{-0mm}
\scalebox{0.95}{
\begin{tabular}{l|rr|rrrrr}
\toprule 
dataset &\# Molecules & \# training set & median & max & min & mean & std \\
\midrule
 BBBP & 2039 &1631 & 22 & 63 & 2 & 22.5 & 8.1 \\
 Tox21 & 7831& 6264 & 14 & 114 & 1 & 16.5 & 9.5 \\
Toxcast & 8575 & 6860 & 14 & 103 & 2 & 16.7 & 9.7 \\
SIDER & 1427& 1141 & 23 & 483 & 1 & 30.0 & 39.7 \\
ClinTox & 1478 & 1181 & 23 & 121 & 1 & 25.5 & 15.3 \\
MUV & 93087 & 74469 & 24 & 44 & 6 & 24.0 & 5.0 \\
HIV & 41127 & 32901 & 23  & 222 & 2 & 25.3 & 12.0 \\
BACE & 1513 & 1210 & 32 & 66 & 10 & 33.6 & 7.8 \\
\bottomrule
    \end{tabular}
    }
    \vspace{-0mm}
    \label{tab:dataset}    
\end{table}

\textbf{Datasets Splitting:} We follow the \textit{Out Of Distribution} data split principle in \citet{hu2019strategies}: \textit{scaffold split}, in which, molecules are clustered by scaffold (molecular graph substructure), and then the dataset is split so that different clusters end up in the training, validation and test sets.  

\textbf{Hyper-parameter Strategy:}
We use Adam optimizer with $L_2$ regularization during test time. For all hyper-parameters, we use grid search strategies  and the range of hyper-parameters listed in Table~\ref{tab:hyper_parameter}, \ref{tab:hyper_parameter_baselines}, where Table \ref{tab:hyper_parameter} shows our methods and Table \ref{tab:hyper_parameter_baselines} shows the baselines. 
\begin{table}[htbp]
\centering

\caption{Hyper-parameter search range for GAPGC. }
\vspace{-1mm}
\resizebox{0.5 \linewidth}{!}{
\begin{tabular}{c|l|c}
\toprule
      & Hyper-parameter        & Range               \\ 
\midrule
    & $\lambda$        & \{0.1, 1, 10, 20, 50\}                  \\\
    & $\gamma$        & \{0.6, 0.7, 0.8, 0.9, 1.0\}                  \\\
    & Learning rate        & \{0.001, 0.005, 0.01,0.0005,0.0001\}                  \\\
    & Weight decay & \{1e-2, 1e-3,1e-4,5e-4, 1e-5,1e-6,1e-7,0\}  \\
    & Dropout rate        & \{0,0.05,0.1,0.15,0.2,0.25,0.3,0.35, 0.4,0.45, 0.5\} \\
    & Batch size & \{32,64,128,256,512\}  \\
    \midrule
    & Optimizer & Adam  \\
    & TTA Epoch & 1 \\
    & runseed & 1\\
    & GPU & 
Tesla V100 \\
  
\bottomrule                     
\end{tabular}                   \textit{}        
}
\label{tab:hyper_parameter}
\end{table}

\begin{table}[htbp] 
\centering

\caption{Hyper-parameter search range for Baselines. }
\vspace{-1mm}
\resizebox{0.5 \linewidth}{!}{
\begin{tabular}{c|l|c}
\toprule
      & Hyper-parameter for Tent       & Range               \\ 
\midrule

    & Learning rate        & \{0.001, 0.005, 0.01,0.0005,0.0001\}                  \\\
    & Weight decay & \{1e-2, 1e-3,1e-4,5e-4, 1e-5,1e-6,1e-7,0\}  \\
    & Batch size & \{32,64,128,256,512\}  \\    
    & Optimizer & Adam \\
 \midrule  
 \midrule
    & Hyper-parameter for BN       & Range               \\ 
    \midrule
    & Batch size & \{32,64,128,256,512\}  \\  
\midrule
\midrule
    & Hyper-parameter for SHOT       & Range               \\ 
    \midrule
     & Learning rate        & \{0.001, 0.005, 0.01,0.0005,0.0001\}                  \\\
    & Weight decay & \{1e-2, 1e-3,1e-4,5e-4, 1e-5,1e-6,1e-7,0\}  \\
    & $\beta$ & \{0 0.1, 0.2, 0.3,0.4,0.5,0.7,0.8,0. 1\} \\
    & Batch size & \{32,64,128,256,512\}  \\    
    & Optimizer & Adam \\

\bottomrule                     
\end{tabular}                   \textit{}        
}
\label{tab:hyper_parameter_baselines}
\end{table}

\section{Additional Experiments.}

We conduct experiments on molecular scaffold OOD datasets with the non pre-trained GIN model, that is, directly training GIN on the training set of OOD datasets without weights initialization from the pretrained model provided by \citet{hu2019strategies}. The results show in Table~\ref{tab:molecular_nopre}.
\begin{table*}[th]
\def\p{$\pm$} 
\setlength\tabcolsep{4pt} 
\centering
\vspace{-0mm}
\caption{Test ROC-AUC (\%) of GIN on molecular property prediction benchmarks with OOD split.('$\uparrow$' denotes  performance improvement compared to the pure test baseline. PF: parameter-free.)}
\vspace{-3mm}
\scalebox{0.83}{
\begin{tabular}{l|cccccccc|c}
\toprule 
Methods & BBBP & Tox21 & Toxcast& SIDER & ClinTox & MUV & HIV & BACE & \textbf{Average}\\
\midrule
\# Test Molecules & 203 & 783 & 857 & 142 & 147 & 9308 & 4112 & 151 & $\diagup$ \\
  \# Binary prediction task & 1 & 12  & 617 & 27 & 2 & 17 & 1 & 1 & $\diagup$\\
\midrule
Test~(baseline)       & 64.56 & 72.57 & 63.72 & 57.81 & 65.54 & 71.06 & 72.85 &  74.54 & 67.83   \\
Tent~\cite{wang2020tent}        & 65.17 & 72.46 & 63.83 & 60.35 & 70.66 & 71.28 & 71.79 & 74.84 & 68.80 \\

BN Ada. & 64.40 & 72.71 & 63.88 & 60.58 & 73.40 & 71.23 & 71.57 &  74.89 & 69.07\\
SHOT~\cite{liang2020we} & 65.99 & 72.55 & 63.91 & 58.21 & 68.36 & 72.19 & 73.43 & 75.60 & 68.78\\
\midrule

GAPGC~(ours) & 68.27$\uparrow_{3.71}$ &	73.21$\uparrow_{0.64}$ &	63.29$\downarrow_{0.43}$ &	58.57$\uparrow_{0.76}$ &	69.91$\uparrow_{4.37}$ &	71.43$\uparrow_{0.37}$ &	73.12$\uparrow_{0.27}$ &	75.27$\uparrow_{0.73}$&	69.13$\uparrow_{1.30}$ \\

\bottomrule
    \end{tabular}
    }
    \vspace{-7mm}
    \label{tab:molecular_nopre}    
\end{table*}


\end{document}